\newcolumntype{P}[1]{>{\centering\arraybackslash}p{#1}}
\theoremstyle{definition}
\newtheorem{definition}{Definition}
\newtheorem{prop}{Proposition}
\algnewcommand\algorithmicforeach{\textbf{for each}}
\newtheorem{theorem}{Theorem}
\newtheorem{lemma}[theorem]{Lemma}
\def\BibTeX{{\rm B\kern-.05em{\sc i\kern-.025em b}\kern-.08em
    T\kern-.1667em\lower.7ex\hbox{E}\kern-.125emX}}
\def\BibTeX{{\rm B\kern-.05em{\sc i\kern-.025em b}\kern-.08em
    T\kern-.1667em\lower.7ex\hbox{E}\kern-.125emX}}
\begin{document}
\bstctlcite{IEEEexample:BSTcontrol}
\author{\IEEEauthorblockN{Zhiyuan Li\IEEEauthorrefmark{1}\thanks{Copyright (c) 2023 IEEE. Personal use of this material is permitted.
Permission from IEEE must be obtained for all other uses, in any current or
future media, including reprinting/republishing this material for advertising or
promotional purposes, creating new collective works, for resale or redistribution to servers or lists, or reuse of any copyrighted component of this work
in other works.},~\IEEEmembership{Student Member,~IEEE},
            Ziru Liu\IEEEauthorrefmark{2}, \thanks{Contact authors: li3z3@mail.uc.edu (Zhiyuan Li); ralescal@ucmail.uc.edu (Anca Ralescu).}
            Anna Zou\IEEEauthorrefmark{3}, 
            Anca L. Ralescu\IEEEauthorrefmark{1},~\IEEEmembership{Senior Member,~IEEE}}
        \IEEEauthorblockA{\IEEEauthorrefmark{1}\emph{Department of Computer Science, University of Cincinnati}}
        \IEEEauthorblockA{\IEEEauthorrefmark{2}\emph{School of Data Science, City University of Hong Kong}}
        \IEEEauthorblockA{\IEEEauthorrefmark{3}\emph{Directorate for Social, Behavioral and Economic Sciences, National Science Foundation}}
}
\title{Learning Empirical Bregman Divergence for Uncertain Distance Representation}
\maketitle
\begin{abstract}
Deep metric learning techniques have been used for visual representation in various supervised and unsupervised learning tasks through learning embeddings of samples with deep networks. However, classic approaches, which employ a fixed distance metric as a similarity function between two embeddings, may lead to suboptimal performance for capturing the complex data distribution. The Bregman divergence generalizes measures of various distance metrics and arises throughout many fields of deep metric learning. In this paper, we first show how deep metric learning loss can arise from the Bregman divergence. We then introduce a novel method for learning empirical Bregman divergence directly from data based on parameterizing the convex function underlying the Bregman divergence with a deep learning setting. We further experimentally show that our approach performs effectively on five popular public datasets compared to other SOTA deep metric learning methods, particularly for pattern recognition problems.
\end{abstract}
\begin{IEEEkeywords}
Bregman divergence, distance representation, deep metric learning, visual representation
\end{IEEEkeywords}

\section{Introduction}
Deep metric learning formulates a task-specific problem for learning the distance metrics among samples. The learned distance can then be applied to object detection, matching, ranking, and other machine learning tasks \cite{mees2017metric,meyer2019importance,rezayati2022improving}. Despite the success and advances of deep metric learning techniques across many applications, selecting the optimal distance metric, or even a general distance metric remains an uncertain task. Yet the chosen distance metric of the learning loss function can be a key factor in deciding the performance of deep learning models by learning the feature representations within the geometric or probabilistic space \cite{bellet2015metric}. In contrast to previous works, instead of selecting the uncertain distance metric, our goal is to learn a generalized distance metric for deep learning classification using the Bregman divergence \cite{bregman1967relaxation}. 

Classic methods of deep metric learning \cite{koch2015siamese,hoffer2015deep,khosla2020supervised} aim to learn a robust feature representation by training a deep encoder over the input space to maximize the distance between similar samples (positive pairs) and minimize the distance between dissimilar ones (negative pairs). For example, the Siamese network \cite{koch2015siamese} uses the Euclidean distance to calculate the distance metric between two feature embeddings and employs a Softmax function to convert the computed distance into a probability to present the similarity score. The Triplet network \cite{hoffer2015deep}, an extended version of the Siamese network, takes a triplet input (anchor, negative and positive) to the deep encoder and aims to group together anchor and positive samples and push away anchor and negative samples. SupCon \cite{khosla2020supervised}, a supervised contrastive learning approach, utilizes cosine similarity as a distance metric in the loss function for learning the discriminative features among samples for classification. Contrastive learning methods, such as SupCon \cite{khosla2020supervised}, Invariant \cite{ye2019unsupervised}, etc., use a similar idea to deep metric learning but with a self-supervision approach. These works can be summarized in two steps: 1) train an embedding function (neural network) $f_{\theta}(\cdot)$ to learn the similarity features among samples with the same labels, 2) fine-tune the pre-trained embedding function $\hat{f}_{\theta}(\cdot)$ for a supervised classification task. Although these methods have achieved excellent results in deep learning classification, the distance metrics are manually selected, which may lead to a sub-optimal performance during training. 

More recently, integrating deep metric learning and statistical divergence has achieved popularity in the metric learning area. One of the most famous statistical divergences, the Bregman divergence, is generated by a strictly convex and continuously differentiable function $\phi$ defined on a closed convex set \cite{bregman1967relaxation}. Depending on the selected underlying convex function, specific distance metrics, such as Euclidean or cosine similarity, can be generated. For example, the Bregman divergence can be chosen as the well-known Kullback-Leibler (KL)-divergence to measure the probabilistic distance between two inputs where comparison is needed for the distributions. However, the learning objective still remains uncertain since the standard family of the Bregman divergences may not fully capture the patterns of samples. In deep divergence learning, employing the Bregman divergence as a deep learning setting captures the nonlinear relationship for learning more generalizable distance metrics among samples \cite{cilingir2020deep}. For example, Siahkamari et al. \cite{siahkamari2020learning} used arbitrary Bregman divergence to learn the underlying divergence function through a piecewise linear approximation approach. Cilingir et al. \cite{cilingir2020deep} proposed deep Bregman divergences by formulating the metric learning task into a particular case of symmetric divergences. However, these works use a linear max-affine function to parameterize $\phi$ while the nonlinear properties are ignored, and the gradient of the learning loss may vanish during the training. 

This paper introduces a framework to learn the functional arbitrary Bregman divergence for distance representation with deep metric and contrastive learning styles, which can be applied to foundational visual representation tasks. We first investigate the relationship between defined metric learning loss and the Bregman divergence. In this setting, we show that for any probability-based similarity measurements using the Softmax function, their metric learning loss can be seen to arise from the Bregman divergence. These included deep metric learning models, i.e., Siamese network, Triplet network, supervised contrastive learning, and a typical contrastive learning model, such as SupCon \cite{khosla2020supervised}. We then turn our attention to learning a Bregman divergence directly through the deep learning approach. In contrast to previous works on pre-fixed distance metrics, e.g., the Euclidean distance and cosine similarity, the learned Bregman divergence represents a \emph{generalizable} solution to effectively capture the similarity information between samples for various deep metric learning tasks. To achieve this, we used the generalized nonlinear models (GNMs) to smoothly parameterize the strictly convex and continuously differentiable function $\phi$. Then, a standard deep learning setting is performed to learn the Bregman divergence using the gradient-descent-based algorithm. To evaluate the performance of our proposed approach, we employed two public datasets to show the empirical results that highlight the effectiveness of our proposed method. In particular, we showed that learning a Bregman divergence directly through the deep learning setting offers classification performance gains over learning a fixed distance metric. We also showed that the learned distance metrics could capture more complex data distribution than other state-of-the-art (SOTA) methods. 

Our main contributions of this work are summarized as follows: (1) We consider the implicit relationship between deep metric learning and the Bregman divergence by proving that for any defined metric learning loss, the general distance metric form can arise from the Bregman divergence. (2) We present a novel framework to learn the uncertain Bregman divergence in a deep learning setting. Instead of fixing the distance metric function during training, we employ the GNMs to parameterize the generating function $\phi$ in the Bregman divergence. Our approach, strictly convex and smoothness, approximates $\phi$ arbitrarily well. (3) With theoretical analysis and extensive experiments, our approach demonstrates the effectiveness of learned empirical distance representation over other SOTA methods in deep metric and contrastive learning settings.

\section{Related Work}
In this section, we first provide a brief overview of deep metric learning and then discuss the related works of the Bregman divergence learning.

\textbf{Deep Metric Learning}: 
With the popularity of deep learning techniques, researchers have started to perform metric learning tasks in a deep learning setting \cite{kaya2019deep}. Similar to classic metric learning methods, deep metric learning focuses on learning the similarity relationship among samples using deep features (e.g., feature embeddings). In this setting,  suppose $f_{\theta}(\cdot)$ is a function that embeds the sample $x$ into a feature embedding $f_{\theta}(x)$, a gradient-descent-based optimization algorithm is employed to iteratively learn the nonlinear distance metric among embedded features $f_{\theta}(x)$ and $f_{\theta}(y)$. Several well-defined loss functions, such as contrastive loss \cite{chopra2005learning}, triplet loss \cite{hoffer2015deep}, NCE loss \cite{gutmann2010noise}, and SupCon loss \cite{khosla2020supervised}, have been proposed to learn discriminative features for classification. However, all of these loss functions use a fixed distance, either Euclidean distance $\|f_{\theta}(x)-\emph{f}_{\theta}(y)\|^2$ or dot product $f_{\theta}(x)^T\emph{f}_{\theta}(y)$.

\textbf{Bregman Divergence Learning}:
Another approach that goes beyond linear metric learning is the Bregman divergence learning framework. The common idea is to generalize the learning distance metric into an arbitrary Bregman divergence \cite{banerjee2005clustering}. Here, beyond the linear metric, in the Bregman divergence, more general asymmetric divergence, such as the KL-divergence, Itakura-Satio divergence, and others, are also considered as the nonlinear distance metric, resulting in more robust performance than linear methods. Learning the functional Bregman divergence is being explored to extend the standard Bregman divergence into a more generalizable form. Instead of taking two vectors as input in functional Bregman divergence, here, we compute the divergence between two functions or probability distributions \cite{frigyik2008functional}. The existing works of learning functional Bregman divergence can be divided into two directions: (1) integrate contrastive learning, and Bregman divergence \cite{rezaei2021deep,lu2022neural}, (2) parameterize the generating function $\phi$ of the Bregman divergence using the max-affine representations approach \cite{siahkamari2020learning, cilingir2020deep}. However, these works have the following shortcomings: (1) the implicit connection between contrastive loss and the Bregman divergence is ignored, (2) the piecewise linear approximation does not yield the continuously differentiable (smoothness) property. In our approach, not only do we study the inner connection between contrastive loss and the Bregman divergence but also directly learn the generation function $\phi$ directly using a set of smooth GNMs approach. 

\section{Learning Bregman Divergence}
In this section, we first formally introduce some definitions and background that will be used throughout the rest of the paper.  We then turn out attention to learning the Bregman divergences for a deep metric learning task, the main contribution of our work.

\subsection{Preliminaries}
\textbf{Bregman Divergence}. The Bregman divergence \cite{bregman1967relaxation} represents a general distance metric between two data inputs. Let $\phi$ be a strictly convex and continuously differentiable function defined on a closed convex set $\Omega \in \mathbb R^d$. The Bregman divergence between two inputs $x$ and $y$ are defined as 
\begin{align}
    d_{\phi}(x,y)=\phi(x)-\phi(y)-(x-y)^T\nabla \phi(y)
\end{align}
where $\nabla \phi(y)$ is the first-order derivative of $\phi(y)$. Examples of several well-known 
distance metrics such as Euclidean distance, KL-divergence, and Itakura-Satio divergence can be parameterized to the Bregman divergence form of \textbf{Eq (1)}. In this paper, we consider the extended version of the Bregman divergence via \emph{functional Bregman divergences}.

\textbf{Functional Bregman Divergence}. Similar to classic Bregman divergences, a functional Bregman divergence \cite{frigyik2008functional} measures the distance between two functions (e.g., probability distributions). Given two functions $p$ and $q$, and a strictly convex function $\phi$, the corresponding functional Bregman divergence is defined by
\begin{align}
    d_{\phi}(p,q)=\phi(p)-\phi(q)-\int \left[p(x)-\emph{q}(x)\right]\delta\phi(q)(x)dx
\end{align}
where $\delta\phi(q)$ represents the functional derivative of $\phi$ at $q$. Same as the classic Bregman divergence, the functional Bregman divergences hold the same properties, such as convexity, non-negativity, linearity, and others. 

\textbf{Convexity}. The Bregman divergence, $\phi$ is restricted to be strictly convex, which constrains the parameterization of the Bregman divergence when choosing a $\phi$. In this paper, we learn the arbitrary Bregman divergence directly through a deep learning approach and consider learning an \emph{optimal} $\phi$. To approach this, we recall the definition of strict convexity:

\begin{definition}
A function $f:\mathbb R^n\to \mathbb R$ is strictly convex if 
    \begin{align}
        f(\lambda x + (1-\lambda)y)<\lambda f(x)+(1-\lambda)f(y)
    \end{align}
 where $\forall x,y,x\neq y, \; \forall \lambda\in (0,1)$.
\end{definition}
\begin{figure*}
    \centering
    \includegraphics[width=0.88\textwidth]{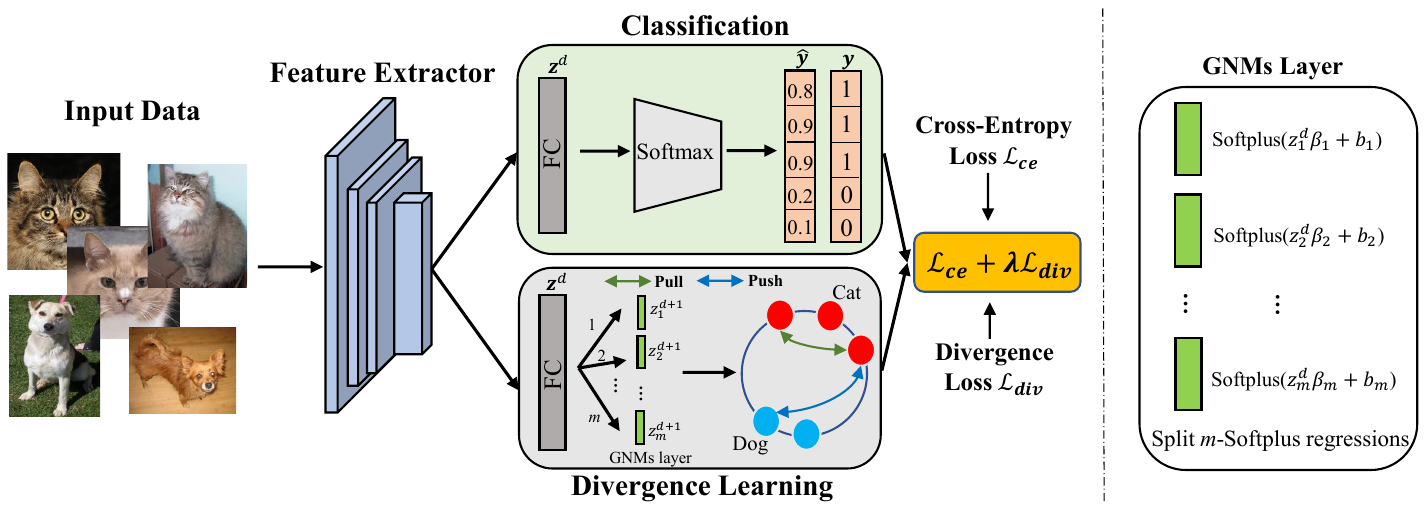}
    \caption{The overview of our proposed framework. We use a pre-trained ResNet18 as the encoder and learn joint tasks of supervised classification and Bregman divergence learning. We employed a group of generalized nonlinear models (GNMs) with the Softplus function to represent $\phi$. The learned distance representation will be further used in image classification tasks using a $k$NN classifier. (Top branch applies cross-entropy loss for classification while bottom branch applies divergence loss for distance learning.)}
\end{figure*}

\subsection{Bregman Divergence View of Deep Metric Learning}
Let $D=\{x_i,y_i\}^{N}_{i=1}$ denote the training data, where $x_i$ is the sample, and its corresponding label $y_i$, $S(i)\in D$ denotes the set of indices for positive pair samples, i.e., $S(i):\{j\in \Lambda|y_i=y_j, i\neq j\}$. Similarly, $K(i)\in D:\{j\in \Lambda|y_i\neq y_j, i\neq j\}$ denotes the indices set for negative pair samples. The probability to recognize  $x_i,x_s,s\in S(i)$ as $y_i$ can be formulated as
\begin{align}
    p(y_i|x_i,x_s)=\frac{\exp(z^T_iz_s)}{\sum_{j\in \Lambda,j\neq i}\exp(z^T_iz_j)}
\end{align}
where $z_i,z_j\in \mathbb R^d$ denotes the $(i,j)_\text{th}$ embeddings extracted from an encoding function $f_{\theta}()$, such that, $z_i=f_{\theta}(x_i)$. Likewise, the probability of $x_i,x_k,k\in K(i)$ is being recognized as $y_i$ can be formulated as
\begin{align}
    p(y_i|x_i,x_k)=\frac{\exp(z^T_iz_k)}{\sum_{j\in \Lambda,j\neq i}\exp(z^T_iz_j)}
\end{align}
To learn a representative distance metric, which groups positive pairs and pushes away the negative pairs, we need to maximize $p(y_i|x_i,x_s)$ and minimize $p(y_i|x_i,x_k)$, simultaneously. Thus, the objective function leads to a maximum likelihood estimation, which is 
\begin{align}
    \ell_i=\prod_{s\in S(i)}\prod_{k\in K(i)}p(y_i|x_i,x_s)\left[1-p(y_i|x_i,x_k)\right]
\end{align}
Thereby, learning loss is the negative-log-likelihood of $\ell_i$ over all the data points indexed by $\Lambda$, which simplifies $\ell_i$ to 
\begin{align}
    \mathcal{L}=-\sum_{i\in \Lambda}&\|\bm{K}(i)\|\sum_{s\in \bm{S}(i)}\log p(y_i|x_i,x_s) \nonumber \\
     &-\sum_{i\in \Lambda}\|\bm{S}(i)\|\sum_{k\in \bm{K}(i)}\log\left[ 1-p(y_i|x_i,x_k)\right]
\end{align}
where $\|\bm{S}(i)\|$ and $\|\bm{K}(i)\|$ are the sizes of their corresponding set.

\begin{prop}
For any probability-based distance metric between two inputs, i.e., $d(x,y)$, with the Softmax function, there exists a general distance form of $d(x,y)$, which arises from the Bregman divergence.
\end{prop}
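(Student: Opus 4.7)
The plan is to rewrite each log-softmax term of the loss in Eq.~(7) as a Kullback--Leibler divergence between a one-hot target distribution and the softmax-induced predictive distribution, and then invoke the classical identification of the KL divergence as the Bregman divergence generated by the negative entropy $\phi(u)=\sum_j u_j\log u_j$ on the simplex. Once both the positive-pair and negative-pair terms are written in this form, the loss $\mathcal L$ becomes an explicit sum of Bregman divergences, and the ``general distance form'' of the softmax-based $d(x,y)$ is exhibited.

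Concretely, for each anchor $i$ I would first introduce the predictive distribution $p_i$ over indices $j\in\Lambda\setminus\{i\}$ defined by $p_i[j]=\exp(z_i^\top z_j)/\sum_{l\neq i}\exp(z_i^\top z_l)$, so that $p(y_i\mid x_i,x_s)=p_i[s]$ and $p(y_i\mid x_i,x_k)=p_i[k]$. Letting $e_s$ denote the indicator at coordinate $s$ and using the convention $0\log 0=0$, a one-line calculation gives $-\log p_i[s]=\mathrm{KL}(e_s\,\|\,p_i)$. A direct verification that $d_\phi(e_s,p_i)=\phi(e_s)-\phi(p_i)-(e_s-p_i)^\top\nabla\phi(p_i)$ equals $\mathrm{KL}(e_s\,\|\,p_i)$ for $\phi(u)=\sum_j u_j\log u_j$ then identifies the positive-pair contribution of $\mathcal L$ as a weighted sum of Bregman divergences in the sense of Eq.~(1).

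For the negative-pair term $-\log\bigl[1-p(y_i\mid x_i,x_k)\bigr]$, the trick is to pass to the binary distribution $q_k=\bigl(p_i[k],\,1-p_i[k]\bigr)$ over the events ``match $x_k$'' versus ``do not match $x_k$'' and take the one-hot target $e_0$ on the second event. Then $-\log(1-p_i[k])=\mathrm{KL}(e_0\,\|\,q_k)$, which is the Bregman divergence generated by the binary negative entropy $\phi_2(u_0,u_1)=u_0\log u_0+u_1\log u_1$ on the $2$-simplex. Summing over $i,s,k$ with the weights $\|\bm K(i)\|$ and $\|\bm S(i)\|$ rewrites $\mathcal L$ entirely as a sum of functional Bregman divergences in the sense of Eq.~(2), with the embedding-level ``distance'' $d(z_i,z_j)$ absorbed into the predictive distributions $p_i$ and $q_k$.

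The main obstacle I expect is the negative-pair term, since $1-p(y_i\mid x_i,x_k)$ is not itself a softmax entry and so does not admit a direct one-hot/KL rewriting on the original simplex; the lift to a binary predictive distribution is the essential step, and one must check that the associated generator $\phi_2$ is strictly convex and continuously differentiable so that Definition~1 and Eq.~(1) apply. A secondary, more interpretive, difficulty is pinning down what ``the general distance form of $d(x,y)$ arises from the Bregman divergence'' means precisely; I would adopt the natural reading that $\mathcal L$ is equal, up to data-dependent but parameter-independent constants (the entropies $H(e_s)=H(e_0)=0$ in this case), to a sum of Bregman divergences between target and softmax-predictive distributions, so that minimizing the softmax-based metric learning loss is equivalent to minimizing a Bregman divergence.
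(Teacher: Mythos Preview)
Your argument is correct and in fact cleaner than the paper's, but it proves a different statement than the one the paper has in mind, and this difference matters for how the proposition feeds into the rest of the work.

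The paper does not use the KL/negative-entropy identification at all. Instead it works directly on the log-softmax terms via two first-order Taylor expansions, $\log(1+u)\approx u$ and $\exp(u)\approx 1+u$, together with the normalization $\|z\|=1$ so that $z_i^\top z_j=1-\tfrac12\|z_i-z_j\|^2$. This rewrites each term of $\mathcal L$ (approximately) as a combination of squared Euclidean distances $\|z_i-z_j\|^2$, which is then identified as the Bregman divergence $d_\phi(f_\theta(x_i),f_\theta(x_j))$ with $\phi(x)=\|x\|^2$. The output is the embedding-level divergence loss $\mathcal L_{\mathrm{div}}$ of Eq.~(11), which is exactly the object the authors go on to parameterize and learn.

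Your route is exact rather than approximate, and the lift of the negative-pair term to a binary distribution is the right move. But the Bregman divergences you obtain live on the probability simplex, between one-hot targets $e_s,e_0$ and softmax predictions $p_i,q_k$; the embeddings $z_i,z_j$ enter only through the definition of $p_i$. So you exhibit $\mathcal L$ as a sum of Bregman divergences $d_\phi(\text{target},\text{prediction})$ with $\phi$ the negative entropy, whereas the paper wants $\mathcal L\approx$ a sum of $d_\phi(z_i,z_j)$ with $\phi$ an as-yet-unspecified convex function on embedding space. The latter is what justifies replacing $\phi$ by a learnable GNM and arriving at Algorithm~1; your formulation, while rigorous, does not by itself motivate learning a divergence \emph{between embeddings}. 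If you want to match the paper's conclusion you would still need the $\|z\|=1$ trick (or something equivalent) to pull the divergence down from distribution space to embedding space.
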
 

\begin{proof}
Here, we prove the \textbf{Proposition 1} starting from the above \textbf{Eq(7)}. For $-\log p(y_i|x_i,x_s)$, based on the first-order Taylor approximation, we have
\begin{align}
    -\log p(y_i|x_i,x_s)&=\log\frac{\sum_{j\in \Lambda,j\neq i}\exp(z^T_iz_j)}{\exp(z^T_iz_s)} \nonumber \\
    &=\log \left(1+\sum_{j\neq i,s}\exp\left[z^T_iz_j-z^T_iz_s\right]\right)  \nonumber \\
    &\approx \sum_{j\neq i,s}\exp\left(z^T_iz_j-z^T_iz_s\right)  \nonumber \\
    &\approx 1+\sum_{j\neq i,s}z^T_iz_j-z^T_iz_s 
\end{align}
Notice that, $z^T_iz_j=1-\frac{1}{2}\|z_i-z_j\|^2$ if $\|z\|=1$. With $\phi(x)=\|x\|^2$, the general distance of $\log p(y_i|x_i,x_s)$ arises from the functional Bregman divergence is 
\begin{align}
    \log p(y_i|x_i,x_s)&\approx \sum_{i\neq s}d_{\phi}\left(f_{\theta}(x_i),f_{\theta}(x_s)\right)-\sum_{j\neq i}d_{\phi}\left(f_{\theta}(x_i),f_{\theta}(x_j)\right) 
 \nonumber \\
    &= \Theta\left(f_{\theta}(x_i),f_{\theta}(x_s),f_{\theta}(x_j) \right) 
\end{align}
Next, we turn our attention to $p(y_i|x_i,x_k)$ of \textbf{Eq (5)}. Following the same approach of \textbf{Eq (8)}, the log of $p(y_i|x_i,x_k)$ can be expressed as
\begin{align}
    \log p(y_i|x_i,x_k)&\approx  \sum_{i\neq k}d_{\phi}\left(f_{\theta}(x_i),f_{\theta}(x_k)\right)-\sum_{j\neq i}d_{\phi}\left(f_{\theta}(x_i),f_{\theta}(x_j)\right) 
    \nonumber \\
    &=\Theta\left(f_{\theta}(x_i),f_{\theta}(x_k),f_{\theta}(x_j) \right) 
\end{align}
Thus, we formulate the learning loss $\mathcal{L}$ of \textbf{Eq (7)} as the Bregman divergences approximation without any scale parameters:
\begin{align}
   \mathcal{L}_{div}\approx-\sum_{i\in \Lambda}\|&\bm{K}(i)\|\sum_{s\in \bm{S}(i)}\Theta\left(f_{\theta}(x_i),f_{\theta}(x_s),f_{\theta}(x_j) \right)  \nonumber \\
    &+\sum_{i\in \Lambda}\|\bm{S}(i)\|\sum_{k\in \bm{K}(i)}\Theta\left(f_{\theta}(x_i),f_{\theta}(x_k),f_{\theta}(x_j) \right) 
\end{align}
This yields the initial idea of divergence learning that maximizes the functional Bregman divergence among a positive pair $\Theta\left(f_{\theta}(x_i),f_{\theta}(x_s),f_{\theta}(x_j) \right) $ and minimizes $\Theta\left(f_{\theta}(x_i),f_{\theta}(x_k),f_{\theta}(x_j) \right)$, the functional Bregman divergence among a negative pair.
\end{proof}

\subsection{Deep Bregman Divergence Learning With GNMs}
\textbf{Parameterization}. We consider learning a functional Bregman divergence $d_{\phi}$ with the deep learning setting. Suppose $\bm{z}^{d}$ is the $d_{\text{th}}$ layer output embedding vector, and $\bm{z}^{d+1}$ is the target embedding, which will be the input for functional Bregman divergence $d_{\phi}$. To directly learn $d_{\phi}$, one thing that needs to be considered is the convexity property of $\phi$. We employ a set of \emph{generalized nonlinear models} to estimate each value of $\bm{z}^{d+1}$. In this case, all $z_i^{d+1} \in \bm{z}^{d+1}$ are independent to each other. Let $\bm{\beta},\bm{b}$ denote the weights and biases, respectively, the $i_{\text{th}}$ expectation value of the $\bm{z}^{d+1}$ is 
\begin{align}
    \mathbb E(z_i^{d+1}|z_i^{d})=\alpha(z_i^{d}\beta_i+b_i)
\end{align}
where $\beta_i\in \bm{\beta},b_i\in \bm{b}$, and $\alpha$ is a convex link function. Note that our learning loss is a functional Bregman divergence, in which $\delta\phi(q)(x)$ involves the second derivatives. Similar to \cite{lu2022neural}, we employ a Softplus as the \emph{parametric link function}.
\begin{lemma}
The Softplus function $\alpha(x)=\log(1+\exp(wx))$ is strictly monotonically increasing, strictly convex, and smooth.
\end{lemma}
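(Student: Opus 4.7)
The plan is to verify each of the three properties by direct calculus, reducing strict monotonicity and strict convexity to positivity of the first and second derivatives respectively, and handling smoothness by noting that $\alpha$ is a composition of smooth functions on their natural domains. Throughout I assume the standard convention $w>0$ (which is what the paper needs for a monotone link function); I would flag this explicitly in the statement.

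For smoothness, I would observe that $\alpha$ is the composition $x\mapsto wx\mapsto \exp(wx)\mapsto 1+\exp(wx)\mapsto \log(1+\exp(wx))$. Each map is $C^{\infty}$ on $\mathbb{R}$ or on its image, and the argument to $\log$ satisfies $1+\exp(wx)>1$ for all $x\in\mathbb{R}$, so the logarithm is only ever applied to strictly positive numbers. Hence $\alpha\in C^{\infty}(\mathbb{R})$ by the chain rule. For strict monotonicity I would compute
\begin{equation*}
\alpha'(x)=\frac{w\exp(wx)}{1+\exp(wx)},
\end{equation*}
and note that the fraction lies in $(0,1)$ for every $x\in\mathbb{R}$, so $\alpha'(x)>0$ under $w>0$. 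For strict convexity I would differentiate once more and obtain
\begin{equation*}
\alpha''(x)=\frac{w^{2}\exp(wx)}{(1+\exp(wx))^{2}},
\end{equation*}
which is strictly positive on all of $\mathbb{R}$ whenever $w\neq 0$. A $C^{2}$ function on an interval with strictly positive second derivative is strictly convex, so this closes the argument.

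There is no real obstacle here — the lemma is a sanity check confirming that Softplus supplies the regularity required to parameterize $\phi$ through the GNM layer, in particular that composing with the affine map $z_i^d\beta_i+b_i$ preserves strict convexity and smoothness, so that $\delta\phi(q)(x)$ in the functional Bregman divergence $d_\phi$ is well defined. The only subtlety worth mentioning is the role of $w$: if $w=0$ then $\alpha$ collapses to a constant and both strict monotonicity and strict convexity fail, while for $w<0$ convexity and smoothness survive but $\alpha$ becomes strictly decreasing. I would therefore state (or implicitly assume) $w>0$, consistent with the way the parametric link function is used in the downstream parameterization of $\phi$.
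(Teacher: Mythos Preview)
Your proof is correct and follows the same approach as the paper, namely computing $\alpha'$ and $\alpha''$ and observing they are strictly positive. In fact your version is slightly more careful: you retain the correct chain-rule factors of $w$ and $w^{2}$ in the derivatives (the paper drops a factor of $w$ in each) and you explicitly flag the assumption $w>0$, without which strict monotonicity fails.
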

\begin{proof}
Obviously, the first and second derivative of $\alpha(x)$ is always positive such that
\begin{align}
    \frac{d}{dx}log\left(1+\exp(wx)\right)&=(1+\exp(-wx))^{-1} \in (0,1) \nonumber \\
    \frac{d^2}{dx^2}log\left(1+\exp(wx)\right)&=\frac{w\exp(wx)}{(1+\exp(wx))^2} \in (0,1)
\end{align}
Thus, $\alpha(x)$ is strictly monotonically increasing, strictly convex, and smooth. 
\end{proof}

\begin{prop}
 The expectation $\mathbb E(\bm{z}^{d+1}|\bm{z}^{d})$ for $(d+1)_{\text{th}}$ layer of the embedding outputs is also strictly monotonically increasing, strictly convex, and smooth.   
\end{prop}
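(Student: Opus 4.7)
The plan is to reduce the vector statement to a coordinate-wise scalar statement and then obtain each of the three properties by composing Lemma 1 with the affine inner map $t \mapsto \beta_i t + b_i$. By Eq.~(12) and the remark that ``all $z_i^{d+1}\in\bm{z}^{d+1}$ are independent to each other,'' the map $\bm{z}^d \mapsto \mathbb{E}(\bm{z}^{d+1}\mid \bm{z}^d)$ acts coordinate-wise: the $i$-th output depends only on the $i$-th input through the scalar function $F_i(t) = \alpha(\beta_i t + b_i)$. So it suffices to establish strict monotonicity, strict convexity, and smoothness of each $F_i$ and then reassemble.

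For a fixed $i$, the inner map $t \mapsto \beta_i t + b_i$ is affine and hence smooth, so $F_i = \alpha\circ(\beta_i\,\cdot\,+b_i)$ is smooth by Lemma~1 and the chain rule. Differentiating gives
\begin{align}
F_i'(t) = \beta_i\,\alpha'(\beta_i t + b_i), \qquad F_i''(t) = \beta_i^{2}\,\alpha''(\beta_i t + b_i).
\end{align}
Lemma~1 guarantees $\alpha'>0$ and $\alpha''>0$ everywhere, so $F_i''(t)>0$ whenever $\beta_i\neq 0$, which yields strict convexity of $F_i$. For strict monotonicity, $F_i'(t) > 0$ provided $\beta_i>0$; this positivity on the weights is the standard input-convex-network assumption implicit in the parameterization used here. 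Finally, assembling the $F_i$'s coordinate-wise produces a map whose Jacobian is diagonal with strictly positive entries and whose coordinate-wise Hessian is strictly positive, so $\mathbb{E}(\bm{z}^{d+1}\mid\bm{z}^d)$ inherits all three properties.

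The main obstacle will be making sense of ``strictly convex'' for a vector-valued map, since strict convexity is canonically a scalar notion. The cleanest resolution is to interpret the claim coordinate-wise, as above, which is the natural reading given that Eq.~(12) was stated per coordinate. Equivalently, when these outputs are later combined into the scalar generator $\phi$ of the functional Bregman divergence via a positively weighted sum (or any convexity-preserving aggregation), strict convexity is inherited because a positively weighted sum of strictly convex smooth functions is strictly convex and smooth. Either reading lets the proposition follow essentially from Lemma~1 plus the chain rule, with no deeper machinery required.
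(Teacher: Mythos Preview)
Your argument is correct and rests on the same core idea as the paper: the expectation is the composition of the Softplus $\alpha$ from Lemma~1 with an affine map, and the three properties are inherited through that composition. The mechanical difference is that the paper verifies strict convexity via Definition~1 (writing out $g(\lambda z_1+(1-\lambda)z_2)=\lambda g(z_1)+(1-\lambda)g(z_2)$ and then applying the strict inequality for $\alpha$), whereas you compute $F_i'$ and $F_i''$ by the chain rule and read off positivity from Lemma~1. Your route has the advantage of making explicit the sign requirement $\beta_i>0$ needed for strict monotonicity (and $\beta_i\neq 0$ for strict convexity), a hypothesis the paper's proof tacitly assumes but never states; you are right that without it the claim fails. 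Your discussion of the vector-valued versus coordinate-wise reading is also a clarification the paper leaves implicit.
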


\begin{proof}
    Let $\lambda \in (0,1)$, $\alpha: \mathbb R^d\to \mathbb R$ denotes a Softplus function, and $g: \mathbb R^n\to \mathbb R^d$ denotes an affine function, such that $g(\bm{z})=\bm{z}\bm{\beta}+\bm{b}$. Suppose $z_1, z_2\in \bm{z}$, we have 
    \begin{align}
        g(\lambda z_1 + (1-\lambda) z_2) = \lambda g(z_1) + (1-\lambda)g(z_2) 
    \end{align}
    By \textbf{Definition 1} and \textbf{Lemma 1}, $\mathbb E=\alpha\circ g$ is as:
    \begin{align}
        \mathbb E(\lambda z_1 + (1-\lambda)z_2) &= \lambda \alpha\left(g(z_1)+(1-\lambda) g(z_2)\right) \nonumber \\
        &<\lambda \alpha \circ g(z_1) + (1-\lambda) \alpha \circ g(z_2) \nonumber \\
        &= \lambda \mathbb E(z_1) + (1-\lambda)\mathbb E(z_2)
    \end{align}
    which means that $\mathbb E$ a strictly convex function. We know that $\mathbb E(\bm{z})=\log\left(1+\exp(\bm{z}\bm{\beta}+\bm{b})\right)$, and $\frac{\partial}{\partial \bm{z}}E(\bm{z}), \frac{\partial^2}{\partial \bm{z}^2}E(\bm{z})$, which are always positive. Thus, $\mathbb E$ is strictly monotonically increasing and smooth.
\end{proof}

\textbf{Learning Algorithm}. With the above foundations, we propose a training algorithm to learn the arbitrary functional Bregman divergence $d_{\phi}$ for classification in \textbf{Figure 1}. Our approach is two-fold: (1) a metric learning task for learning $d_{\phi}$ among samples from an input batch, (2) a classification task for learning the label information. Specifically, we directly employ the defined divergence loss $\mathcal{L}_{div}$ in \textbf{Eq (11)} as distance metrics and train it with a cross-entropy loss $\mathcal{L}_{ce}$ jointly. In this case, the learned distance metrics capture more patterns than any pre-fixed distance metrics, leading to higher predictive power for classification. Let $D=(x_i,y_i)_{i=1}^{N}$ denote the training dataset with labels $\bm{y}$, a positive pairs set $D_s=(x_i,x_s)_{i=1}^{N}, y_i=y_s$, negative pairs set  $D_k=(x_i,x_k)_{i=1}^{N}, y_i\neq y_k$, a arbitrary deep encoder $f_{\theta}$, where the pseudo-code is summarized in \textbf{Algorithm 1}.
\begin{algorithm}
\caption{\textbf{Deep Bregman Divergence Learning for Classification Via Joint Training}}
\begin{algorithmic}
\Require $D, D_s,D_k,\mathcal{L}_{div},f_{\theta}$, and $\mathcal{L}_{ce}$.
\State $f^{d}_{\theta} \gets$ a $d_{\text{th}}$ layer of $f_{\theta}$ for feature extraction
\ForEach {$(x_i,x_j,y_i)\in D,(x_i,x_s)\in D_s,(x_ix_k)\in D_k$}
\State $z_i^{d}, z_j^{d}, z_s^{d}, z_k^{d}\gets f_{\theta}^{d}(x_i), f_{\theta}^{d}(x_j), f_{\theta}^{d}
(x_s), f_{\theta}^{d}(x_k)$
\State Compute $\phi(z_i^d),\phi(z_j^d),\phi(z_i^s),\phi(z_j^k)$ \Comment{\textbf{Eq (12)}}
\State $d_{\phi_1},d_{\phi_2},d_{\phi_3}\gets d_{\phi}(z_i^d,z_j^d),d_{\phi}(z_i^d,z_s^d),d_{\phi}(z_i^d,z_k^d)$
\State $\ell_{div}\gets \mathcal{L}_{div}(d_{\phi_1},d_{\phi_2},d_{\phi_3})$ \Comment{\textbf{Eq (11)}}
\State $\ell_{ce}\gets  \mathcal{L}_{ce}(x_i,y_i)$ \Comment{cross-entropy loss}
\State $\mathcal{L}^{*}\gets \ell_{ce}+\gamma \ell_{div}$ \Comment{joint training}
\State $\mathcal{L}^{*}.\text{backward()}$ \Comment{perform backpropagation}
\EndFor
\State \Return $\hat{f}_{\theta}$ \Comment{the pre-trained $f_{\theta}$}
\end{algorithmic}
\end{algorithm}

\section{Data and Experiments}
\subsection{Dataset}
We employed five datasets for image recognition, namely iChallenge-PM \cite{fu2019palm}, iChallenge-AMD \cite{fang2022adam}, Caltech-UCSD Birds (CUB200 dataset) \cite{welinder2010caltech}, Animal FaceHQ (AFHQ) \cite{choi2020stargan} and Oxford-III Pet \cite{parkhi2012cats}. The dataset consists of 1,200 annotated retinal fundus images from 2 classes, 1,200 color fundus images with 400 ones released with annotations from 2 classes, 11,788 bird images from 200 classes, 16,130 animal images from 3 classes, and 7,349 from 2 classes, respectively. To provide the advantage of our approach in small-size samples, we randomly selected 300 and 1000 images from the AFHQ and the Oxford-III pet with an equal ratio of each class, respectively,

\subsection{Implementation Details}
As shown in \textbf{Figure 1}, our approach is built on a network backbone, i.e., pre-trained ResNet18 \cite{he2016deep}, with the same setting in our previous study \cite{li2023learning, li2023novel} for feature extraction. With the output of ResNet18, $\textbf{f}$, is then connected to a Multi-Layer Perceptron (MLP) layer, followed by batch normalization and a Rectified Linear Unit (ReLU) activation function. The output of this process is reduced feature dimension to 128, denoted as $\bm{z}$. For the classification branch, $\bm{z}$ is connected to a Softmax and cross-entropy loss $\mathcal{L}_{ce}$ with labels. For the divergence learning task, $\bm{z}$ is input to a $L_2$ norm layer, resulting in $\|\bm{z}\|=1$, and sequentially followed by a GNMs layer fused with $k$-Softplus regression outputs. Furthermore, a divergence metric loss $\mathcal{L}_{div}$ is employed to learn the arbitrary Bregman divergence. To test the learned Bregman divergence, we applied the $k$NN classifier based on $\bm{z}$, in which we set $k=50$ empirically. We randomly resized each image within a range of 0.3 to 1.0 for each batch size. The batch size was set to 32, and the model optimization was performed using the Adam optimizer. We set the learning rate and weight decay to 0.0001 and trained the whole framework for 2000 epochs. To evaluate the model, we used accuracy and the Area Under the Receiver Operating Characteristic (ROC) curve (AUC). Following standard practice, we used 10-fold cross-validation to evaluate each competing method. In addition, we conducted a non-parametric Wilcoxon test with a significance level of 0.05 for all statistical inferences using R-studio. The framework was implemented using python 3.8, Scikit-Learn 0.24.1, Pytorch 1.9.1, and Cuda 11.1 on a NVIDIA GeForce GTX 1660 SUPER GPU.

\subsection{Competing State-of-The-Art Methods}
We compared our approach with other SOTA methods across the deep metric learning and contrastive learning, including Siamese network \cite{koch2015siamese}, Triplet network \cite{hoffer2015deep}, N-pair \cite{sohn2016improved}, SupCon \cite{khosla2020supervised}, GHM \cite{li2023learning}, PDBL \cite{siahkamari2020learning}, and DeepDiv \cite{cilingir2020deep} using their released code on GitHub. A supervised learning baseline was also included by modifying the last fully connected layer of ResNet18 to match the number of classes with a cross-entropy loss for classification. To ensure fairness, all methods were trained with the same feature extractor (e.g., ResNet18) with consistent hyperparameters, learning rate, batch size, and optimizer. We fixed the classification branch and replaced the divergence learning branch. To show the effectiveness of Bregman divergence, we replaced the similarity functions (cosine similarity, Euclidean distance) with the Bregman divergence in the SOTA methods to perform metrics comparison.

\subsubsection{Quantitative Results}
To demonstrate the promise of the proposed method, we performed image classification tasks on five datasets and compared the prediction performance of our approach with other SOTA methods. The results are shown in \textbf{Table I}. Our approach significantly outperforms the other SOTA methods with higher overall accuracy and AUC. The results of the iChallenge-PM dataset indicate that all methods can achieve over 95\% accuracy and AUC, demonstrating the feasibility of identifying pathological myopia from color fundus images. However, the performance drops for all methods on the iChallenge-AMD dataset due to insufficient annotated samples. For the CUB200 dataset, even though the dataset contains a large number of samples, the classification performance achieved similar results as the iChallenge-AMD dataset since the bird patterns are harder to detect, making the classification task more difficult challenging. Our approach outperforms other SOTA methods, showing the effectiveness of learned Bregman divergence for image recognition tasks. 
\begin{table*}
\centering
\caption{Competing for SOTA deep metric learning and contrastive learning methods on five selected datasets (UNIT: \%). ResNet18 is employed as a network encoder for feature extraction}
\sisetup{table-format=-1.3, table-number-alignment=center}
\begin{tabular}{r*{5}{SSc}SS}
\hline
&\multicolumn{2}{c}{iChallenge-PM} &&\multicolumn{2}{c}{iChallenge-AMD}&&\multicolumn{2}{c}{CUB200}&&\multicolumn{2}{c}{AFHQ}&&\multicolumn{2}{c}{Oxford-III Pet}&&\\
\cmidrule(l){2-3} \cmidrule(l){5-6} \cmidrule(l){8-9}\cmidrule(l){11-12}\cmidrule(l){14-16}
SOTAs & {Accuracy} & {AUC} & & {Accuracy} & {AUC} & & {Accuracy} & {AUC} & &{Accuracy}& {AUC} & & {Accuracy} & {AUC}\\
\hline
Baseline \cite{he2016deep} & 95.45 & 96.01 && 84.14 & 76.51 && 71.02 & 69.14 && 76.42 & 75.54 && 77.50 & 78.24\\
Siamese  \cite{koch2015siamese} & 95.12 & 97.21 && 78.14 & 69.45 && 77.14 & 73.45 && 80.25 & 82.64 && 85.12 & 84.54\\
Triplet  \cite{hoffer2015deep} & 95.12 & 97.21 && 80.18 & 70.28 && 80.14 & 75.65 && 83.36 & 84.45 && 85.50 & 84.47\\
N-pair \cite{sohn2016improved} & 96.45 & 94.38 && 85.12 & 74.54 && 82.14 & 79.45 && 78.98 & 81.25 && 81.65 & 80.22\\
SupCon \cite{khosla2020supervised} & 98.22 & 98.06 && 85.64 & 73.24 && 81.45 & 78.46\ && 82.87 & 79.69 && 86.20 & 82.84\\
GHM \cite{li2023learning} & 95.24 & 95.36 && 82.47 & 72.58 && 79.45 & 77.41 && 81.54 & 79.65 && 82.41 & 83.10\\
PDBL \cite{siahkamari2020learning} & 98.57 & \textbf{98.42} && 85.04 & 78.69 && 80.47 & 80.14\ && 84.12 & 84.74 && 85.50 & 85.10\\
DeepDiv \cite{cilingir2020deep} & 97.25 & 98.05 && 86.51 & 73.65 && \textbf{83.47} & 80.87 && 82.05 & 81.25 && 81.10 & 78.65\\
\textbf{Ours} &\textbf{99.12} & 98.14 && \textbf{87.45} & \textbf{80.17} && 82.53 & \textbf{82.49} && \textbf{85.45} & \textbf{86.03} && \textbf{88.20} & \textbf{89.35}\\
\hline
\end{tabular}
\end{table*}

\subsubsection{Model Generalizability}
To prove the generalizability of our proposed method, we first compared our approach with other SOTA methods on the AFHQ dataset and saved each pre-trained model. Since AFHQ and Oxford-III Pet contain cat and dog classes, we employed Oxford-III Pet as an independent external dataset and evaluated model generalizability using the pre-trained models. The results are shown in the last two columns of \textbf{Table I}. Our approach achieved overall classification performance with more precise accuracy and AUC on internal validations using the AFHQ dataset and external validation using the Oxford-III Pet dataset. In this way, we presented the generalization capabilities of learned empirical Bregman divergence of our proposed method for image classification.

\subsection{Ablation Study}
\subsubsection{Impact of Divergence Learning Loss}
Our learning objective is a linear combination of two loss functions, i.e., $ \mathcal{L}_{ce}$ and $\mathcal{L}_{div}$. Here, we analyzed the importance of the divergence loss $\mathcal{L}_{div}$ by training our approach with different $\gamma$ using the iChallenge-AMD dataset, in which $\gamma$ indicates a weighting factor of $\mathcal{L}_{div}$. The results are demonstrated in $\textbf{Figure 3}$. We found that when $\gamma=0.0$, the network is equivalent to a supervised baseline method with 84.14\% on accuracy and 76.51\% on AUC. As $\gamma$ increases, the performance improves and reaches the best performance when $\gamma=1.0$. This shows that the classification and divergence learning branches contribute equally to diagnosing age-related macular degeneration (AMD). 
\begin{figure}
    \centering
    \includegraphics[width=7.5cm]{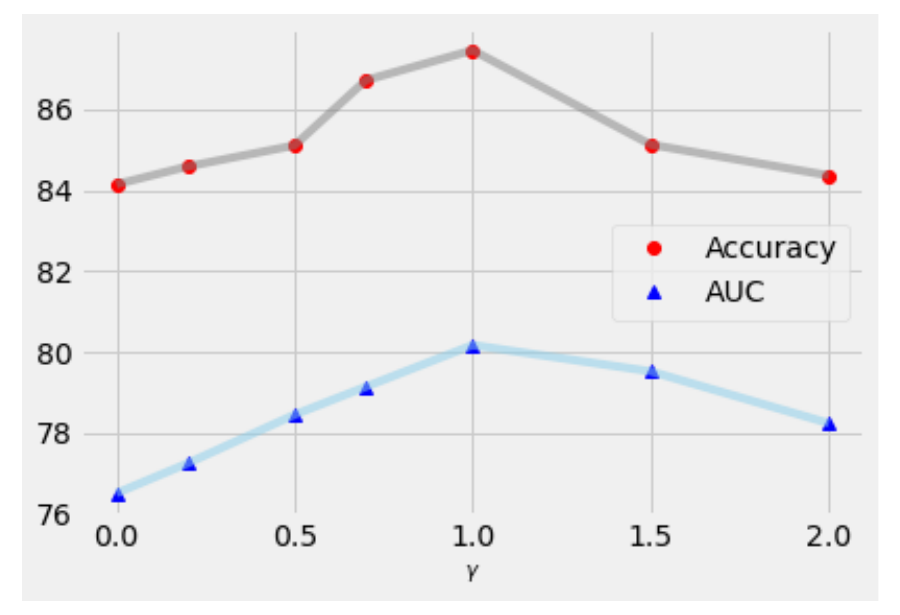}
    \caption{Training model on the iChallenge-AMD using different $\gamma$ of $\mathcal{L}^{*}$ in \textbf{Algorithm 1}. We achieved the best accuracy and AUC when $\gamma=1.0$.}
\end{figure}

\subsubsection{Quality Representations}
To verify the effectiveness of the learned feature representation of our approach, we use t-SNE to represent the last fully connected layer after CNN. As shown in \textbf{Figure 3}, we compared our approach with other SOTA metric learning and divergence learning methods on the 1000 testing AFHQ dataset. It is observed that our approach demonstrates a more precise decision boundary between the two classes. These results further show that learning the empirical Bregman divergence provides a better solution to capture the discriminative patterns. 
 \begin{figure}
    \centering
    \includegraphics[width=9.0cm]{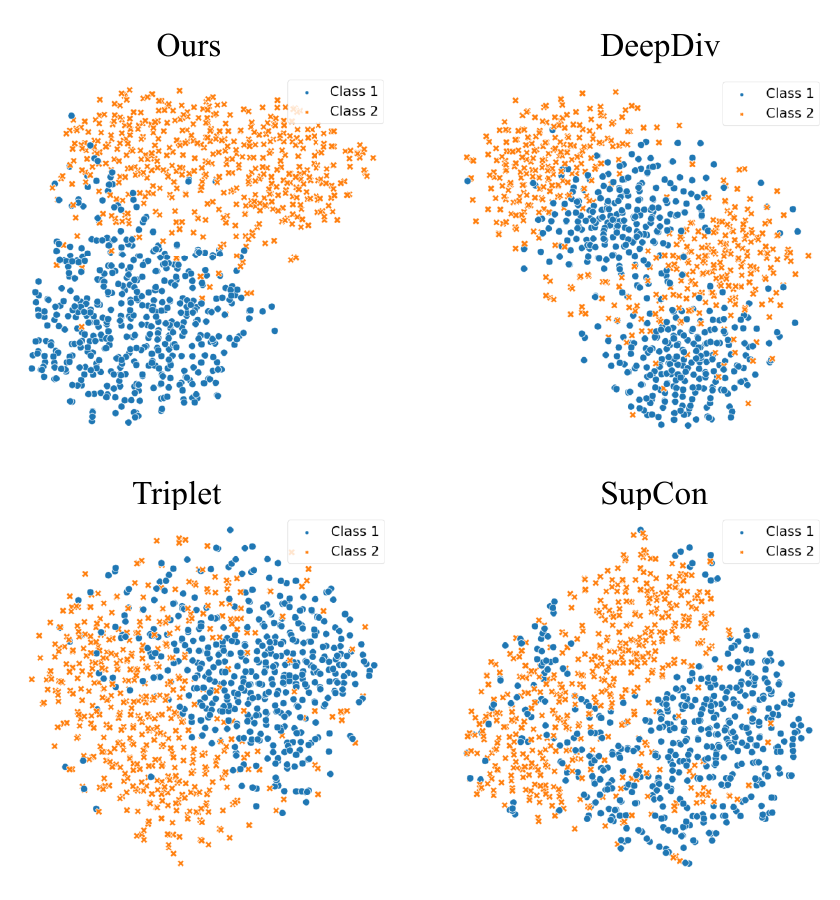}
    \caption{t-SNE visualization of learned embeddings from ResNet18 on the AFHQ dataset. Our approach precisely captures the decision boundary for separating two classes.}
\end{figure}
 \begin{figure*}
    \centering
     \includegraphics[width=0.85\textwidth]{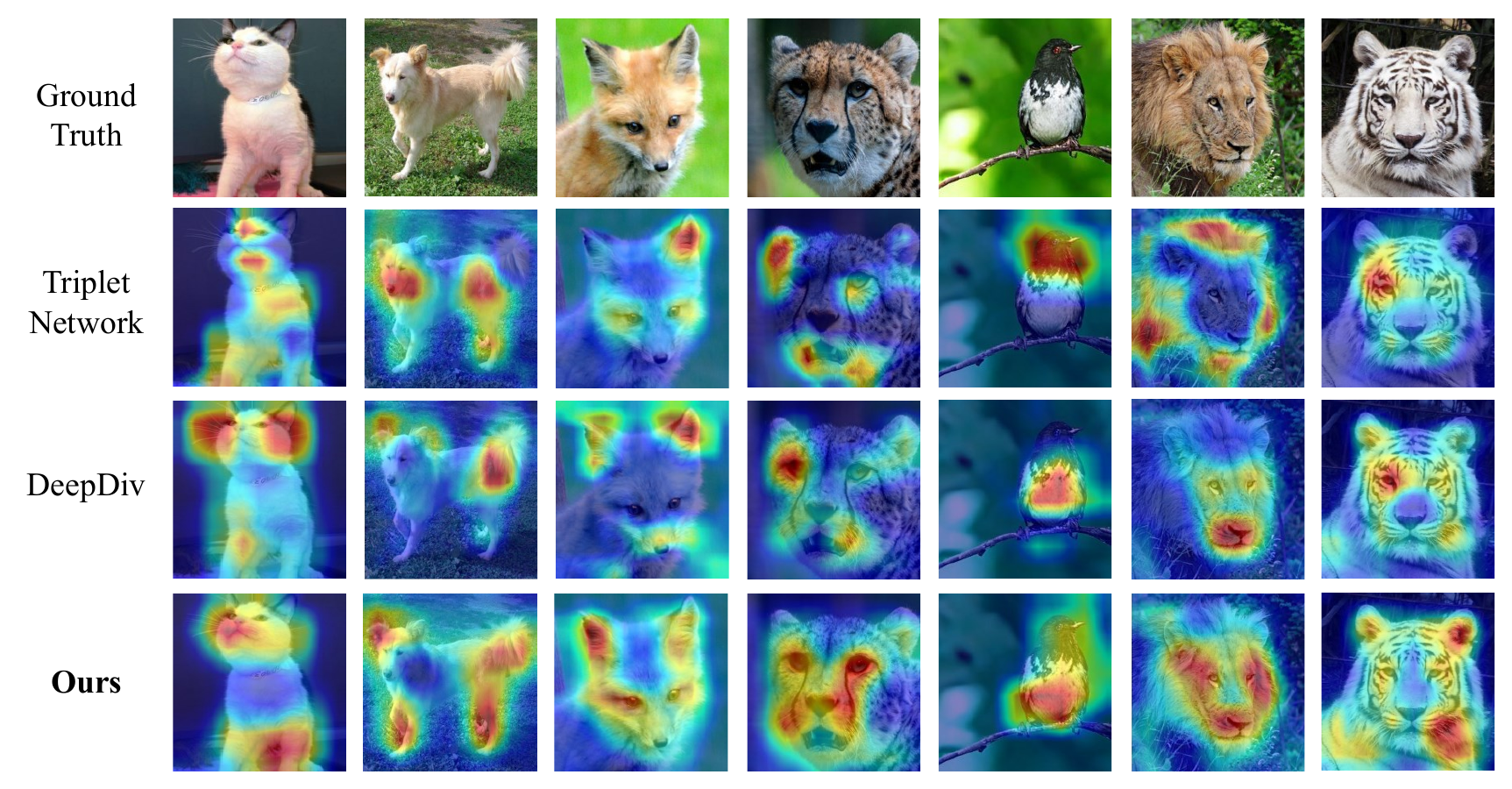}
    \caption{Feature visualization with different SOTA methods on three datasets (AFHQ, Oxford-III pet, and CUB200) using GradCAM. The more discriminative patterns of images indicate the high attention scores of the heatmap. Our approach captures more patterns than other methods, resulting in better classification performance.}
\end{figure*}
\subsubsection{Feature Visualization}
We compared our approach to Triplet Network \cite{hoffer2015deep} and Deep Divergence Learning \cite{cilingir2020deep} by showing the feature attention maps of the last ResNet18 block (\textbf{Figure 4}). We randomly selected seven input images from AFHQ, Oxford-III pet, and CUB200 datasets. We applied Grad-CAM \cite{selvaraju2017grad} to localize the discriminative patterns by pointwise multiplying the attention map with backpropagation corresponding to image classification. This visualization suggests the attention to various image patterns in each model for classification. Compared to other SOTA methods, our approach learns the empirical Bregman divergence that can help the network focus on the correction positions of images in terms of learning a more robust feature representation for classification.

\subsubsection{Metrics Comparison}
This section shows the advantage of learned Bregman divergence for capturing complex similarity using synthetic examples where existing approaches would fail. Assuming the relationship between two embedding presents a complex distribution, i.e., a random nonlinear correlation. We split the synthetic dataset into a supporting set and a query set, in which we train a Siamese network using the supporting set and apply the pre-trained network to match the query set. \textbf{Figure 5} shows each sample input (pixel) is highlighted if it matches the query input. We compared empirical Bregman divergence with other fixed distance functions in this setting, including cosine similarity and KL-divergence. As we can see, the learned Bregman divergence captures the best representation of nonlinear similarity among two embeddings. At the same time, other distances are not discriminative enough to capture the basic patterns under this complex distribution.
 \begin{figure}
    \centering
      \includegraphics[width=9.0cm]{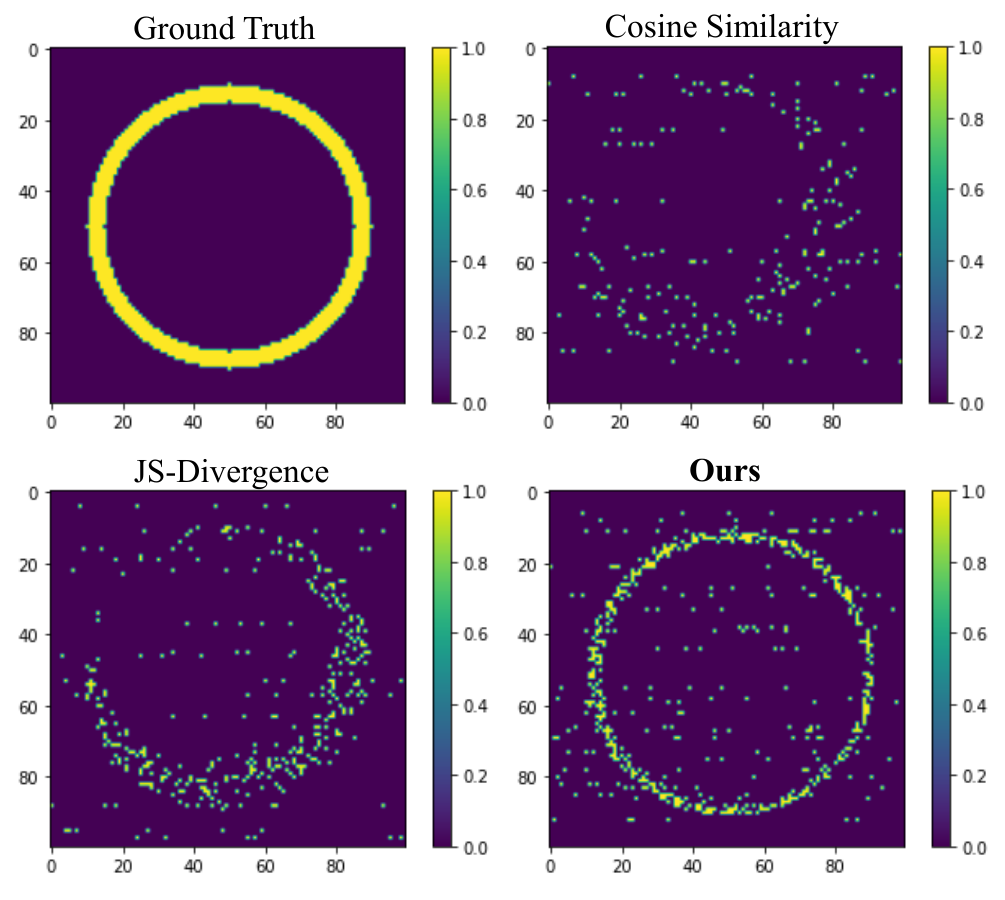}
    \caption{
The synthetic example showing learned Bregman divergence can capture more complex distribution on a matching. Each color pixel indicates the correct matched case from the supporting and query sets. 
}
\end{figure}

\subsubsection{Effects of the Number of m-Softplus Regressions}
Our proposed method contains $m$-Softplus regression of the GNM layer for parameterizing the convex function $\phi$ of the Bregman divergence. Here, we study the effects of different $m$ on classification performance. To assess it, we train our model with different $m$ and then compare the performance on the datasets of the iChallenge-AMD, CUB200, and AFHQ. The results are shown in $\textbf{Figure 6}$. We can see that performance for all datasets increases until $m=150$, then drops. We also observed a similar phenomenon in \cite{rezaei2021deep}. 
 \begin{figure}[ht]
    \centering
    \includegraphics[width=7.5cm]{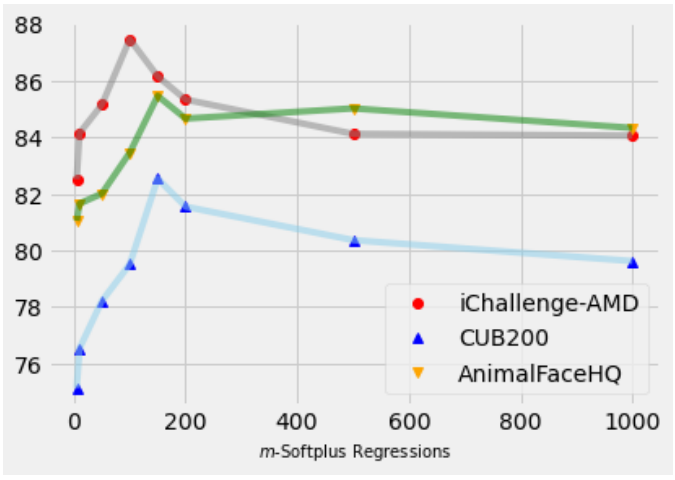}
    \caption{The importance of increasing Softplus regressions (m) on accuracy (\%): We train our model with different $m$ on the iChallenge-AMD, CUB200, and AFHQ datasets. The performance reaches best when $m=150$, then drops down because of over-parameterization. 
}
\end{figure}

\section{Discussion and Future Work}
Learning a representative distance is vital in visual representation for enhancing machine vision and pattern recognition. The learned distance representation can be further applied to various downstream tasks, including classification, clustering, and object detection. With the advances in deep learning techniques, deep metric learning has been widely used in the visual representation, and machine intelligence community \cite{meyer2019importance,mees2017metric,li2015weakly}. Besides promising evidence from previous studies \cite{koch2015siamese, hoffer2015deep, khosla2020supervised}, classic deep metric learning employed fixed distance metrics as the similarity function during the training, resulting in ignoring natural data distribution. Across probability theory and information science, the Bregman divergence uses a strictly convex function to represent a general distance metric, which provides a potential solution to address the challenge of arbitrary distance selection. This work first proves the equivalent relationship between a general metric learning loss and the Bregman divergence. We then present a novel approach to learn the empirical Bregman divergence by parameterizing a convex function between two feature embeddings in a deep metric learning style. Unlike previous works, our approach directly learns an optimal distance representation from data, showing practical advances for complex sample distributions. Compared to other SOTA methods, our approach consistently achieves promising results on five public datasets, which shows the supervisor of the learned distance representation. In addition to performance evaluation with other SOTA methods, extensive ablation studies are provided to further prove our approach's effectiveness.

Although our approach outperforms other SOTA methods, it still comes with limitations. First, we only study learning a Bregman divergence in a supervised manner, which relies on a more significant number of annotated training samples and requires expensive human effort. Secondly, our approach employs a GNM layer, which may be computationally costly if $m$ is large. In the future, we will investigate learning empirical Bregman divergence in an unsupervised or self-supervised learning style and study a more efficient alternative approach. 

\section{Acknowledgement}
Professor Anca Ralescu would like to thank Professor Shun'ichi Amari who first mentioned the Bregman divergence to her.

\bibliographystyle{IEEEtran}
\bibliography{ref.bib}
\end{document}